\tikzstyle{box} = [rectangle, rounded corners, minimum width=1.6cm, minimum height=0.8cm,text centered, draw=white, fill=white]
\tikzstyle{arrow} = [thick,->,>=stealth]
\tikzstyle{vertex}=[circle, draw, inner sep=0pt, minimum size=14pt]
\newcommand{\vertex}{\node[vertex]}
\newtheorem{remark}{Remark}%
\newcommand{\prem}{\text{\tt Prem}} 
\newcommand{\conc}{\text{\tt Conc}} 
\newcommand{\sub}{\text{\tt Sub}} 
\newcommand{\sr}{\text{\tt StRules}} 
\newcommand{\rules}{\text{\tt Rules}}
\newcommand{\norms}{\text{\tt Norms}}
\begin{document}

\title{Cross-Border Legal Adaptation of Autonomous Vehicle Design based on Logic and Non-monotonic Reasoning}


\author{Zhe Yu}
\authornote{Both authors contributed equally to this research.}\authornote{Corresponding author.}
\orcid{0000-0002-8763-4473}
\affiliation{%
  \institution{Institute of Logic and Cognition, Department of Philosophy, \\Sun Yat-sen University}
  \city{Guangzhou}
  \country{China}
}
\email{yuzh28@mail.sysu.edu.cn}

\author{Yiwei Lu}
\authornotemark[1]
\author{Burkhard Schafer}
\email{Y.Lu-104@sms.ed.ac.uk}
\email{B.Schafer@ed.ac.uk}
\affiliation{%
  \institution{School of Law, Old College, University of Edinburgh}
  \city{Edinburgh}
  \country{UK}}

\author{Zhe Lin}
\affiliation{%
  \institution{Department of Philosophy, \\Xiamen University}
  \city{Xiamen}
  \country{China}
}
\email{pennyshaq@163.com}



\begin{abstract}
This paper focuses on the legal compliance challenges of autonomous vehicles in a transnational context. We choose the perspective of designers and try to provide supporting legal reasoning in the design process. Based on argumentation theory, we introduce a logic to represent the basic properties of argument-based practical (normative) reasoning, combined with partial order sets of natural numbers to express priority. Finally, through case analysis of legal texts, we show how the reasoning system we provide can help designers to adapt their design solutions more flexibly in the cross-border application of autonomous vehicles and to more easily understand the legal implications of their decisions. 
\end{abstract}

\begin{CCSXML}
<ccs2012>
   <concept>
       <concept_id>10010147.10010178.10010187.10010189</concept_id>
       <concept_desc>Computing methodologies~Nonmonotonic, default reasoning and belief revision</concept_desc>
       <concept_significance>500</concept_significance>
       </concept>
 </ccs2012>
\end{CCSXML}

\ccsdesc[500]{Computing methodologies~Nonmonotonic, default reasoning and belief revision}

\keywords{Autonomous Vehicles, Transnational AI, Argumentation Frameworks, Lambek Calculus, Automatic Legal Compliance }


\maketitle

\section{Introduction}\label{sec:intro}

A widely recognized requirement for autonomous vehicles is their capability to operate across jurisdictions (e.g., cross-border freight transport and interregional passenger services) while maintaining consistent compliance with diverse regulatory frameworks \cite{Pattinson2020}. 
In the traditional automotive industry, designers are only responsible for meeting the production requirements of the vehicle, while driving is the responsibility of the driver. Therefore, the traditional model is for manufacturers to design the physical properties of the respective prototypes according to the requirements of different regions. However, in the context of automated vehicles, part of the driving responsibility is also borne by the designers, so the designers need to face more complex legal reasoning during the design process. Combined with the application goal of cross-border driving, the traditional design model is no longer applicable in terms of both economy and feasibility. Therefore, we believe that intelligent tools that help designers understand the legal implications of their design decisions and efficiently make adjustments to their design plans would be beneficial. 

Design-centric regulatory approaches are gaining traction in the field of artificial intelligence. 
In terms of the technology of autonomous vehicles, engineers and designers need to do the following, based on existing legal documents such as the EU AI Act~\cite{ACT} and the proposal by the Law Societies of England and Wales: provide engineering solutions that demonstrate the product's compliance with legal requirements; and document and demonstrate that design choices aim for legal compliance. This places previously unanticipated requirements and costs on designers and manufacturers for the legal compliance of products, especially in more complex legal scenarios. Traditional legal reasoning systems focus on two ideas: creating an electronic judge~\cite{mills2016artificial} or a perfect law-abiding citizen's brain from machine learning methods~\cite{licari2024italian}. In the context of this paper, we argue that addressing these challenges from the designer's perspective during the design process can yield valuable insights. 

To capture the non-monotonic nature of legal reasoning and its unique explanatory requirements, in previous studies \cite{lujurix22,lu2023legal}, we have investigated the application of argumentation-based reasoning systems to support designers in identifying and addressing conflicts between autonomous vehicles and specific legal systems. This includes the development of explanation mechanisms that align more closely with legal interpretive principles, as well as a preliminary exploration of design adaptation between two distinct legal frameworks in transnational contexts. 
In the current paper, we introduce a logic to represent the basic properties of argument-based practical (normative) reasoning, and combine a partial order set of natural numbers ($\mathbb{N}$) to express priorities. The resulting logic is the negative and predicate extension of a fragment of the Lambek Calculus \cite {Lambek1958} ($\mathbf{L}$), called LN, as a basic system for structured argumentation theory. 

Compared with the system presented in our previous work~\cite{lujurix22,lu2023legal}, as well as with other systems offering similar functionalities, the advantages of our proposed design are as follows: 
First, it can ensure the rationality of legal reasoning, i.e., it will not produce results that violate the intuition of legal rationality. Second, the design of LN focuses on the understandability of the reasoning process and results, which is conducive to our goal of using it by designers with non-legal backgrounds. They can easily understand and use the reasoning results without the need for systematic legal knowledge or an in-depth understanding of the logical provisions of the system. Besides, key features of LN include: 
\begin{enumerate}
    \item Quantified labels for propositional preferences and preference-based reasoning, enabled by its derivation-tracing capability. 
    \item A minimalistic foundation comprising only widely accepted principles of practical reasoning based on argumentation. 
\end{enumerate}

While first-order logic proves undecidable and overly complex for our purposes, LN constitutes a conservative extension of the Lambek calculus. As demonstrated in prior research \cite{Lambek1958, Docherty2018}, when constrained to specific formula classes relevant to practical applications, this logic can be effectively reduced to a non-associative Lambek calculus with assumptions. Crucially, this variant has been proven to be polynomial-time decidable, and its computational implementation can leverage the well-established CYK (Cocke-Kasami-Younger) algorithm - a mature parsing technique with extensive applications in natural language processing. 

The following sections are organized as follows: In Section \ref{sec:rw}, we discuss related works; in Section \ref{sec:formal}, we provide definitions of the non-monotonic reasoning system (an argumentation theory) for cross-country autonomous driving and its underlying logic; in Section \ref{sec:application}, we demonstrate the application of the system combined with legal analysis; finally, in Section \ref{sec:conc}, we present the conclusions.

\section{Related Work}\label{sec:rw}
Eastman and Collins et al. \cite{eastman2023comparative} emphasized the critical role legal frameworks play as gatekeepers influencing AV operation, development, and liability for global deployment of AI vehicles. Dhabu and Ankita \cite{dhabu2024legal} extend this by highlighting complexities in cross-border AI applications, advocating for flexible compliance strategies due to the uncertainty and time-consuming nature of developing unified global standards \cite{cihon2019standards}. Aligning with this flexible compliance approach, Kingsdon \cite{kingston2017using} developed a practical intelligent assistance system to meet GDPR requirements, an idea in much way in line with this study.

Addressing conflicting information in cross-border legal scenarios has become essential, requiring manufacturers to balance legal requirements, societal expectations, and economic benefits \cite{singer2024corporate}. Fakeyede et al. \cite{fakeyede2023navigating} demonstrate the necessity of smoothly transitioning between distinct privacy frameworks such as GDPR and CCPA to manage legal risks. Eggers et al. \cite{eggers2022drivers} further emphasize user preferences and brand experiences as influential factors in AV adoption decisions, underscoring the need for customizable and market-sensitive strategies.

Our system builds on structured argumentation frameworks, incorporating rule-based knowledge representation and preference handling, akin to \textit{ASPIC}$^+$ \cite{MP13}. Priority orderings have traditionally been used to handle legal hierarchies, but this approach is not directly visible and difficult to trace, making it less accessible to non-specialists. LN's preference method offers a clearer comprehension and more efficient reasoning traceability, as detailed later. Our objective is a logically simple yet robust system tailored specifically for autonomous driving applications in cross-border contexts, facilitating easy comprehension and effective preference reasoning for general users. 

\section{Formal Logic and Definition of Arguments}\label{sec:formal}

\begin{definition}
The formal language $\mathcal{L}$ is defined inductively as follows ($t$ denotes a term and $\alpha$ denotes a formula):
\small\[
t ::= x
\mid c
\mid f(t_1, \ldots, t_n)
\]
where $x$ and $c$ represent variables and constants respectively, $f$ represents a function. Let $A$ represent a predicate and \( p \) an atomic formula ($At (t_1, \ldots, t_n)$). 
\small\[A ::= p\in At
\mid \neg A 
\mid A_1 \cdot A_2 
\mid \forall x A  
\mid \bot
\]
Define $\exists A= \neg \forall \neg A$ and $u=\neg \bot$.
\end{definition}
\begin{definition}\footnote{Note that a sequence is a multiset where the order of elements is essential. For instance, A, B is different from B, A, while A, A, B is different from A, B.}
The set of sequences denoted by $\Gamma$ of formulas is defined inductively as follows:
\small\[\Gamma ::=A
\mid \Gamma, \Gamma  
\]
\end{definition}

We define $\mathcal{L}$- sequent. A sequent is
of the form \small\[
\Gamma\Rightarrow A
\]
where the antecedent is a sequence of $\mathcal{L}$ formulas and succedent is a formula in $\mathcal{L}$. The axiomatization of logic is presented by sequent way.  Hereafter, we denote this logic by LN. 

\begin{definition}
LN consists of the following axioms and rules.
\begin{small}
\begin{itemize}
	\item[$(1)$] Axiom Schemes: 
        \[ A\Rightarrow A \quad(\mathrm{Id})\]
	\item[$(2)$] Rules of Inference:
	\begin{itemize}
       \item[] 
       \[\frac{\Gamma\Rightarrow A}{\Gamma, u\Rightarrow A}\quad({\mathrm{uL})}\quad\frac{}{\Rightarrow u}\quad({\mathrm{uR})}\footnote{
       Rules for the unit $u$ are to indicate that the unit of the formula is the empty string.}
       \quad\frac{\Delta \Rightarrow B\quad\Gamma, B\Rightarrow A}{\Gamma,\Delta\Rightarrow A}\quad(\mathrm{Cut})
       \]\[\frac{\Gamma, A, B \Rightarrow C}{\Gamma_1, A\cdot B,\Rightarrow C}\quad(\cdot\mathrm{L})\quad \frac{\Gamma\Rightarrow A\quad \Delta\Rightarrow B}{\Gamma, \Delta\Rightarrow A\cdot B}\quad(\cdot\mathrm{R})
       \]
       \[\frac{A, \Gamma \Rightarrow }{\Gamma\Rightarrow \neg A}{(\neg\mathrm{L})}\,\frac{\Gamma\Rightarrow A}{\Gamma, \neg A\Rightarrow}{(\neg\mathrm{R})}\, \frac{\Gamma_1, A\Rightarrow B}{\Gamma_1,\neg\neg A\Rightarrow B}\,(\neg \mathrm{L})\,\frac{\Gamma\Rightarrow A}{\Gamma_\Rightarrow \neg\neg A}\,(\neg \mathrm{R})\]
       \[\frac{\Gamma, A[t/x]\Rightarrow B}{\Gamma, \forall xA[x] \Rightarrow B} \quad(\forall\mathrm{L}) \quad \frac{\Gamma\Rightarrow A[y/x]}{\Gamma\Rightarrow \forall x A[x]}\,\quad(\forall\mathrm{R})\]
    
       \[\frac{\Gamma,(\Delta_1,\Delta_2),\Delta_3\Rightarrow A}{\Gamma,\Delta_1,(\Delta_2,\Delta_3)\Rightarrow A} \quad (\mathrm{Ass})\quad \frac{\Gamma, A\Rightarrow B}{A,\Gamma\Rightarrow B}{(\mathrm{Cyc})}\]
        \end{itemize}
	
\end{itemize}
\end{small}
where in ($\forall\mathrm{R}$) $y$ may not occur in $\Gamma, \Delta, A[-]$.
\end{definition}

 By $\vdash_{\mathrm{LN}} \Gamma\Rightarrow A$ where $\Gamma=A_1,\ldots,A_n$, one denotes that the sequent $\Gamma\Rightarrow A$ is valid (provable) in LN. 
 \begin{proposition}
If $\vdash_{\mathrm{LN}}\Gamma_1, A, \Gamma_2\Rightarrow B$, then $\vdash_{\mathrm{LN}}\Gamma_1, \neg B, \Gamma_2\Rightarrow \neg A$.  
 \end{proposition}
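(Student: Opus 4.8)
The plan is to derive the target sequent from the hypothesis $\Gamma_1, A, \Gamma_2 \Rightarrow B$ purely by rearranging the antecedent with the cyclic-exchange rule $(\mathrm{Cyc})$ and by shuttling a formula across the turnstile with the two negation rules $\frac{\Gamma\Rightarrow C}{\Gamma,\neg C\Rightarrow}$ and $\frac{C,\Gamma\Rightarrow}{\Gamma\Rightarrow\neg C}$. Neither $(\mathrm{Cut})$ nor the product rules are needed.

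First I would bring $A$ to the head of the antecedent. Since iterating $(\mathrm{Cyc})$ realises an arbitrary cyclic permutation of the antecedent, from $\Gamma_1, A, \Gamma_2 \Rightarrow B$ I can pass to $A, \Gamma_2, \Gamma_1 \Rightarrow B$. Next, applying $\frac{\Gamma\Rightarrow C}{\Gamma,\neg C\Rightarrow}$ with $C=B$ appends $\neg B$ at the tail and empties the succedent, giving $A, \Gamma_2, \Gamma_1, \neg B \Rightarrow$. Because $A$ now sits at the head, the rule $\frac{C,\Gamma\Rightarrow}{\Gamma\Rightarrow\neg C}$ with $C=A$ extracts it as $\neg A$ on the right, yielding $\Gamma_2, \Gamma_1, \neg B \Rightarrow \neg A$. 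Finally a second round of $(\mathrm{Cyc})$ rotates this to $\Gamma_1, \neg B, \Gamma_2 \Rightarrow \neg A$, which is exactly the goal.

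The point requiring care — and the reason the order of the steps matters — is that cyclic exchange permutes the antecedent only up to rotation, never up to reversal. Had I appended $\neg B$ before moving $A$ to the front, extracting $A$ would leave the residual context in the reversed order $\Gamma_2,\dots,\Gamma_1$ around $\neg B$, and no sequence of $(\mathrm{Cyc})$ steps could restore the required order $\Gamma_1, \neg B, \Gamma_2$. By first rotating $A$ to the head, the context remaining after the extraction is the single cyclic block $\Gamma_2, \Gamma_1, \neg B$, whose rotation $\Gamma_1, \neg B, \Gamma_2$ places $\neg B$ in precisely the position vacated by $A$. I would also verify the minor bookkeeping point that both $(\mathrm{Cyc})$ phases act on sequents with non-empty succedents ($B$ and $\neg A$ respectively), so the rule is used only in the form in which it is stated, while the empty-succedent sequents arise only transiently between the two negation steps.
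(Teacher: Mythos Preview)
Your derivation is correct. The paper states this proposition without proof, so there is nothing to compare against; your argument supplies exactly the short syntactic derivation one would expect: rotate $A$ to the head with $(\mathrm{Cyc})$, push $B$ across as $\neg B$ using the rule $\frac{\Gamma\Rightarrow C}{\Gamma,\neg C\Rightarrow}$, pull $A$ out as $\neg A$ using $\frac{C,\Gamma\Rightarrow}{\Gamma\Rightarrow\neg C}$, and rotate back. Each step matches the rules as the paper states them, and you are right that the two uses of $(\mathrm{Cyc})$ occur only on sequents with non-empty succedent.

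One small expository remark: the decisive reason the order of steps is forced is simply that $(\mathrm{Cyc})$ as formulated requires a formula on the right, so once the succedent is emptied you cannot rotate at all until you restore one via $\frac{C,\Gamma\Rightarrow}{\Gamma\Rightarrow\neg C}$. Your ``reversed order'' discussion is a secondary observation about what would go wrong even if rotation on empty succedents were admissible; it is not incorrect, but the primary obstruction is the shape of the $(\mathrm{Cyc})$ rule itself, which you do mention at the end.
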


 \begin{remark}
 We consider sequences instead of sets in argumentation reasoning. For example, consider the inference rule $A_1,A_2\ldots A_n\Rightarrow B$. 
In application contexts, the arrangement of arguments is naturally essential, which implies that the ordering of $A_1, A_2, \ldots, A_n$ is also essential. 
 Moreover, the arguments may sometimes be resource sensitive. For example, consider two different arguments $Ag_1$ and $Ag_2$ with the same conclusion, which is "Mary has five dollars" (denoted by $A$). Then one concludes that "Mary has ten dollars" but not "Mary has 5 dollars". Hence in such reasoning, $A$ and $A$ cannot be considered equal to $A$. 
 For the above reason, we consider some basic substructural logics as our logic base in the present paper.
 \end{remark}

 \begin{remark}\label{Decidable:LN}
The logic LN is the predicate extension of the $``\cdot,\neg"$-fragment of Lambek Calculus with unit and cyclic involutive negation in \cite{buszkowski2019involutive} when $\neg A=A\backslash 0$ ($0$ is denoted by $\bot$ here) in \cite{buszkowski2019involutive}. The propositional fragment of LN is decidable, which means that for any given sequent $\Gamma\Rightarrow A$, there exists a terminable algorithm to tell you whether $\Gamma \Rightarrow A$ is provable in it.
 \end{remark}
 
In the present paper, we always assume that $\mathbb{N}_0\subseteq At$. In the following, we use $\mathbb{N}$ to denote $\mathbb{N}_0$.
\begin{definition}
Let $\{0\}\subseteq \mathbb{\overline{N}}\subseteq\mathbb{N}$ be finite and $\mathcal{\overline{N}}$ be the set of sequents generated from $(\mathbb{N},\leq)$ by the following way
\[\mathcal{\overline{N}}=\{i\Rightarrow j| i\leq j\in (\mathbb{N},\leq) \,\& \,i,j\in \mathbb{\overline{N}}\}. \]    
\end{definition}

\begin{definition}\label{Def:Prinformula}
A prin formula is a formula of the following form:
\[B=A\cdot i_1\cdot i_2\cdot\ldots\cdot i_n\& i_j (1\leq j\leq n)\in \mathbb{N}\& A\not\in \mathbb{N}\]
We call the natural number following after subformula $A$ by first prin of $A$, second prin of $A$ and so on.\footnote{We assume that the natural numbers correspond to the immediate front nonnumeric subformula. For example, consider the formula $p\cdot 1\cdot 5 \cdot q$. Then $1,5$ correspond to $p$.}  If $A\not\in\mathbb{N}$ is an atomic formula, then we say that $B$ is literal. Let us denote the set of all prin formulas by $\mathcal{PF}$.
\end{definition}
\begin{definition}

A set of legal norms $\mathcal{N}$ is a finite set of sequents in the following form 
\[ l_1\cdot\ldots\cdot l_n\Rightarrow l \] 
where $l_i,l$ are literals $ 1\leq i\leq n$.
\end{definition}

By $\mathcal{\overline{N}}\vdash_{\mathrm{LN}} \Gamma\Rightarrow A$, we mean that $\Gamma \Rightarrow A$ is derivable from $\mathcal{N}$ in LN. 
Define $\mathcal{R}_{s}=\{\Gamma\Rightarrow A| \mathcal{\overline{N}}\vdash_{\mathrm{LN}}\Gamma\Rightarrow A\}$. A sequent $s$ is called a strict inference rule if $s\in \mathcal{R}_{s}$ and denoted by $\Gamma\Rightarrow_s A$. 
Let $\mathcal{N}\cap \mathcal{\overline{N}}=\emptyset$ and $\mathcal{N}\cap \mathcal{R}_{s}=\emptyset$. In the following, we always assume that $\mathcal{N}$ is finite.  A seqent $s\in \mathcal{N}$ is denoted by $\Gamma\Rightarrow_n A$. Let $\mathcal{L}_{s}$ be the set of all sequents. Obviously $\mathcal{\overline{N}}\subseteq\mathcal{R}_{s}\subseteq \mathcal{L}_{s}$ and  $\mathcal{N}\subseteq \mathcal{L}_{s}$. In the following, we write $\vdash_{\mathrm{LN}}$ to denote $\mathcal{\overline{N}}\vdash_{\mathrm{LN}}$.

\begin{definition}
We define some functions on $\mathcal{L}$: 
\begin{itemize}
    \item en($A$)=$A'$ such that $A'$ is obtained from $A$ by eliminating all  formulas in $\mathbb{N}$.
    \item nn($A$)=$A'$ such that $A'$ is obtained from $A$ b by eliminating all formulas in $\mathcal{L}/\mathbb{N}$. nn($A$)=$0$ if the process returns an empty string.\footnote{For instance, let $A=p\cdot 3\cdot 5$, then nn($A$)=$3\cdot 5$. Let $A=p\cdot q$, then nn($A$)=$0$.}
    \item nx($A$)$=\{ p| p\in \mathbb{N} \& p \, \mathrm{appears\, in\, x \,position\, of}\, A\}$.\footnote{For instances, let $A=p\cdot 3\cdot 5$, then n2($A$)=\{5\} while n1($A$)=\{3\}.}
\end{itemize}
\end{definition}

 An argumentation theory for legal reasoning (LeAr) based on LN and $\mathcal{L}$ can be defined as follows. 
  \begin{definition}[LeAr]\label{def:lesac}
    A $LeAr$ is a pair $( \mathcal{N}, \mathcal{K})$, 
    where:
    \begin{itemize}
        \item $\mathcal{N}$ is a finite set of legal norms; 
        \item $\mathcal{K}\subseteq \mathcal{L}$ is a finite set of accepted (justified) knowledge.
    \end{itemize}
    
\end{definition}  
\begin{definition}
Based on \(\mathcal{K}\), arguments can be constructed via $\mathrm{LN}$. Let the following functions be defined for any argument \(Ag\):
\begin{itemize}
    \item \(\texttt{Prem}(Ag)\): returns the set of all formulas from \(\mathcal{K}\) that were used to build \(Ag\).
 \item \(\texttt{Conc}(Ag)\): returns the conclusion of \(Ag\).
 \item \(\texttt{Sub}(Ag)\): returns the set of all subarguments of \(Ag\).
  \item \(\rules(Ag)\): returns the set of all rules applied in \(Ag\).
 \item \(\texttt{Norms}(Ag)\): returns the set of all norms applied in \(Ag\).
 \item \(\sr(Ag)\): returns the set of all strict rules (rules in \(\mathcal{R}_s\)) applied in \(Ag\). 
 \item \(\texttt{TConc}(Ag)\): \texttt{TConc}($Ag$)=\texttt{enConc}($Ag$)
\end{itemize}
\end{definition}

\begin{definition}[Arguments]\label{def:argument}
	Let $\mathcal{A}$ be the set of all constructible arguments based on $\mathrm{LN}$ and a LeAr.
    Each argument $Ag\in\mathcal{A}$ is defined as follows: 
	\begin{enumerate}
	\item $Ag=A \in \mathcal{K}$. In this case $\prem(Ag)=\{A\}$, $\conc(Ag)=A$, $\norms(Ag)=\emptyset$,  $\sr(Ag)=\emptyset$, $\rules(Ag)=\emptyset$, and $\sub(Ag)=\{A\}$;
		\item $Ag$  is $Ag_{1}$  $\ldots$, $Ag_{n}$ $\mapsto Ag$, where $ Ag_{1} $, $ \ldots $, $Ag_{n}\in\mathcal{A}$ satisfying that  $ \conc(Ag_{1}) $, $ \ldots $, $ \conc(Ag_{n}) $ $\Rightarrow_s A$. In this case 
		$ \prem(Ag)=\prem(Ag_{1})\cup \ldots \cup \prem(Ag_{n}) $,  $\conc(Ag)=A$, 
		$ \norms(Ag)=\norms$ $(Ag_{1})\cup \ldots \cup \norms(Ag_{n}) $,  $\sr(Ag)=\sr(Ag_1)\cup\ldots\cup\sr(Ag_n)\cup$ $\{\conc(Ag_{1}) $, $ \ldots $ , $\,$ $\conc(Ag_{n}) $ $\Rightarrow_s A\}$, $\rules(Ag)$ $=\norms(Ag)\cup\sr(Ag)$, and
		$ \sub(Ag)=\sub(Ag_{1})\cup \ldots \cup \sub(Ag_{n})\cup\{Ag\}$;  
	\item $Ag$  is $Ag_{1}$  $\ldots$, $Ag_{n}$ $\mapsto Ag$, and there exists a defeasible rule (norm) in $\mathcal{N}$ such that  $\conc(Ag_{1}) $, $ \ldots $, $ \conc(Ag_{n}) $ $\Rightarrow_{n} A$. In this case $ \prem(Ag)$ $=\prem(Ag_{1})\cup \ldots \cup \prem(Ag_{n}) $, $\conc(Ag)=A$, 
		$ \norms(Ag)\\=\norms(Ag_{1})$ $\cup \ldots \cup \norms(Ag_{n}) \cup \{ $ $\conc(Ag_{1}) $, $ \ldots $ , $\,$ $\conc(Ag_{n}) $ $\Rightarrow_n A\}$, $\sr(Ag)=\sr(Ag_1)\cup\ldots\cup\sr(Ag_n)$, $\rules(Ag)=\norms(Ag)\cup\sr(Ag)$, and
		$ \sub(Ag)=\sub(Ag_{1})$ $\cup \ldots \cup \sub(Ag_{n})\cup\{Ag\}$.

	\end{enumerate}
\end{definition}
\begin{remark}
 Let $\mathcal{M} = \mathcal{N} \cup \overline{\mathcal{N}}$. We denote by $\mathrm{LN}(\mathcal{M})$ the logic $\mathrm{LN}$ enriched with the set of sequents $\mathcal{M}$ as assumptions; that is, all sequents in $\mathcal{M}$ are treated as additional axioms in $\mathrm{LN}$. $\mathcal{M} \vdash_{\mathrm{LN}} \Gamma \Rightarrow A$ indicates that $\Gamma \Rightarrow A$ is derivable from $\mathcal{M}$ in $\mathrm{LN}$—equivalently, that $\Gamma \Rightarrow A$ is provable in $\mathrm{LN}(\mathcal{M})$. 
 Let $\mathcal{A}$ be the set of all constructible arguments based on a $LeAr$ $( \mathcal{N}, \mathcal{K})$ and $\mathrm{LN}$. Define $\mathcal{M}$ as above. Let $\mathcal{\overline{K}}=\{\Rightarrow A|A\in \mathcal{K}\}$ and $\mathcal{W}=\mathcal{\overline{K}}\cup\mathcal{M}$, $Ag\in\mathcal{A}$ iff $\mathcal{W}\vdash_{\mathrm{LN}} \Rightarrow \conc(Ag)$. 
\end{remark}

\begin{definition}[Conflicts]\label{def:conflict}
	Let $Ag_1$, $Ag_2$, $Ag_2'\in \mathcal{A}$ be arguments. $Ag_1$ attacks $Ag_2$ on $Ag_2'\in\sub(Ag_2)$ if Conc($Ag_2'$)$=A$, and the following holds: 
    \begin{itemize}
    \item $Ag'_2=A\in\mathcal{K}$, or $Ag'_2$ is of the form $Ag''_1, \ldots, Ag''_n\Rightarrow_nA$;
    \item Conc($Ag_1$)$=\neg A$.
    \end{itemize}
\end{definition} 

For any $Q\subseteq\mathcal{K}\subseteq\mathcal{L}$, let $Cl_{\mathcal{R}_s}(Q)$ denote the closure of $Q$ under strict rules, i.e., all consequences of $Q$ in LN. Let $Q\vdash_{\mathrm{LN }}A$ denote that there exists an argument $Ag$ constructed by only strict rules, such that $\texttt{Prem}(Ag)\subseteq Q$ and $\texttt{Conc}(Ag)=A$.

\begin{definition}[Consistency]\label{def:consistency}
	$Q\subseteq\mathcal{L}$ is \textit{consistent} iff for any $A,B\in Cl_{\mathcal{R}_s}(Q)$, \(A\not=\neg B\);  \\
	
\end{definition}

\begin{remark}
Clearly, LN is a consistent logic. Therefore, $Q$ is consistent iff for any $A,B\in Cl_{\mathcal{R}_s}(Q)$, \(A\not=\neg B\).   
\end{remark}
\begin{remark}
We do not differentiate direct and indirect consistency as in \cite{MP13}. This is because we assume $\mathcal{R}_s$ contains all provable sequents in LN with $\mathcal{\overline{N}}$. Hence there is no difference between the direct and indirect consistency in this context.
\end{remark}

\begin{definition}[Preferences on arguments]\label{def:prefarg}
 For any $Ag_1$, $Ag_2\in\mathcal{A}$,
 $Ag_1\prec_x Ag_2$ if  $nx(\mathrm{Conc}(Ag_1))<nx(\mathrm{Conc}(Ag_2))$.
\end{definition}

In the following, without confusion, we denote arbitrary $<_i$ by $<$. 
\begin{definition}[Strict continuation]
For any set of arguments $\{Ag_1, \ldots, Ag_n\}$, $Ag$ is a \textit{strict continuation} of $\{Ag_1, \ldots, Ag_n\}$ if: 
\begin{itemize}
    \item $\norms(Ag)=\bigcup_{i=1}^n\norms(Ag_i)$,
    \item $\sr(Ag)\supseteq\bigcup_{i=1}^n\sr(Ag_i)$,
    \item $\prem(Ag)\supseteq\bigcup_{i=1}^n\prem(Ag_i)$.
\end{itemize}  
\end{definition}

The following definition of reasonable argument ordering for LeAr is modified from~\cite{MP13}.
\begin{definition}[Reasonable preferences for arguments]\label{def-reasonableOrdering}
	Let $\prec$ be a preference ordering on $\mathcal{A}$ constructed based on a LeAR.  $\prec$ is reasonable if $\forall Ag_1, Ag_2\in\mathcal{A}$,  let $Ag_1^+\in\mathcal{A}$ be a strict continuation of $\{Ag_1\}$; if $Ag_1\nprec Ag_2$, then $Ag_1^+\nprec Ag_2$, and if $Ag_2\nprec Ag_1$, then $Ag_2\nprec Ag_1^+$.
		
Let $ \preceq$ be defined naturally. Hence  $\preceq$ is reasonable if $\prec$ is reasonable.
\end{definition}

\begin{proposition}\label{pro-resonableOrder-last}
	The preference ordering $\prec$ ($\preceq$) on arguments is reasonable.
\end{proposition}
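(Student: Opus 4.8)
The plan is to reduce the reasonableness of $\prec$ to a single invariance property of prins under strict continuation, and to prove that property by tracking prins through the construction of arguments; reasonableness of $\preceq$ will then be free, since Definition~\ref{def-reasonableOrdering} already records that $\preceq$ is reasonable once $\prec$ is. First I would unfold Definition~\ref{def-reasonableOrdering}: fixing arbitrary $Ag_1, Ag_2 \in \mathcal{A}$ and a strict continuation $Ag_1^+$ of $\{Ag_1\}$, I must establish that $Ag_1 \nprec Ag_2$ implies $Ag_1^+ \nprec Ag_2$, and that $Ag_2 \nprec Ag_1$ implies $Ag_2 \nprec Ag_1^+$. By Definition~\ref{def:prefarg} the relation $\prec$ (abbreviating $\prec_x$) is nothing but the comparison of $nx(\mathrm{Conc}(Ag_1))$ with $nx(\mathrm{Conc}(Ag_2))$, so all four assertions depend only on the three numbers $nx(\mathrm{Conc}(Ag_1))$, $nx(\mathrm{Conc}(Ag_1^+))$ and $nx(\mathrm{Conc}(Ag_2))$. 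Everything therefore rests on the invariance claim that $nx(\mathrm{Conc}(Ag_1^+)) = nx(\mathrm{Conc}(Ag_1))$ whenever $Ag_1^+$ is a strict continuation of $\{Ag_1\}$.

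Granting that claim, both implications are one line: writing $a = nx(\mathrm{Conc}(Ag_1)) = nx(\mathrm{Conc}(Ag_1^+))$ and $b = nx(\mathrm{Conc}(Ag_2))$, the hypothesis $Ag_1 \nprec Ag_2$ reads $a \not< b$, which is exactly $Ag_1^+ \nprec Ag_2$, and symmetrically $Ag_2 \nprec Ag_1$ reads $b \not< a$, which is exactly $Ag_2 \nprec Ag_1^+$. So the entire proof is concentrated in the invariance claim.

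To prove it I would argue from the strict-continuation definition, which forces $\norms(Ag_1^+) = \norms(Ag_1)$ and permits only $\sr$ and $\prem$ to grow. The guiding idea is that the numeric suffixes occupying a conclusion are introduced exclusively through the literal succedents of norms — a norm $l_1 \cdot \ldots \cdot l_n \Rightarrow l$ carries its priorities inside $l$ — whereas the additional strict rules of $\mathcal{R}_s$ only transport a formula to its $\mathrm{LN}$-consequences over $\mathcal{\overline{N}}$; hence the prin at position $x$ should be inherited unchanged from the (fixed) norm applications. I would make this precise by inducting on the $\mathrm{LN}$-derivation witnessing $\mathcal{W} \vdash_{\mathrm{LN}} \Rightarrow \mathrm{Conc}(Ag_1^+)$, the characterisation supplied by the Remark after Definition~\ref{def:argument}, and checking rule by rule that every purely strict step leaves the subformula in $\mathbb{N}$ at position $x$ neither created, deleted, nor relocated.

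The step I expect to be the main obstacle is the interaction between strict rules and the order assumptions $\mathcal{\overline{N}} = \{ i \Rightarrow j \mid i \leq j \}$. Since $i \Rightarrow j$ is itself a strict rule whenever $i \leq j$, the sequent $A \cdot i \Rightarrow A \cdot j$ is $\mathrm{LN}$-derivable, so a naive strict continuation could relax a prin $i$ to a larger $j$, enlarge $nx(\mathrm{Conc})$, and thereby destroy the equality together with the second implication. To close the argument I would have to show that such priority-relaxing sequents do not occur as conclusion-forming steps under the strict-continuation definition — for instance by arguing that the sequents of $\mathcal{\overline{N}}$ contribute only to the preference test of Definition~\ref{def:prefarg} and never as internal $\sr$-applications, or by restricting the admissible $\sr$-steps of a continuation to prin-preserving sequents. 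Pinning down exactly this restriction, and verifying that it is what the strict-continuation definition in fact licenses, is the crux; once position-$x$ prins are shown invariant, the proposition follows by the one-line computation above.
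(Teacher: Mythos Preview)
The paper states this proposition without proof, so there is no argument to compare yours against. On the substance, though, your proposal does not close: the invariance claim on which everything rests is false under the paper's definitions, and the obstacle you yourself flag is real, not merely apparent.

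You hope to rule out prin-relaxing strict steps by ``arguing that the sequents of $\mathcal{\overline{N}}$ contribute only to the preference test \ldots\ and never as internal $\sr$-applications.'' But the paper defines $\mathcal{R}_s = \{\Gamma \Rightarrow A \mid \mathcal{\overline{N}} \vdash_{\mathrm{LN}} \Gamma \Rightarrow A\}$, so every $\mathrm{LN}$-consequence of $\mathcal{\overline{N}}$ \emph{is} a strict rule. In particular, from $p \Rightarrow p$ and $i \Rightarrow j \in \mathcal{\overline{N}}$ one derives $p \cdot i \Rightarrow_s p \cdot j$ by $(\cdot\mathrm{R})$ then $(\cdot\mathrm{L})$; nothing in the strict-continuation definition (which only requires $\norms(Ag_1^+) = \norms(Ag_1)$ and $\sr(Ag_1^+) \supseteq \sr(Ag_1)$) excludes using it. So the restriction you need is simply not there.

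Worse, this breaks not just your proof strategy but the proposition as stated. Take $p\cdot 3,\, q\cdot 4 \in \mathcal{K}$ with $3,4,5 \in \mathbb{\overline{N}}$. Let $Ag_1 = p\cdot 3$, $Ag_2 = q\cdot 4$, and let $Ag_1^+$ be $Ag_1$ followed by the strict rule $p\cdot 3 \Rightarrow_s p\cdot 5$. Then $Ag_1^+$ is a strict continuation of $\{Ag_1\}$, and $n1(\conc(Ag_1)) = \{3\}$, $n1(\conc(Ag_2)) = \{4\}$, $n1(\conc(Ag_1^+)) = \{5\}$. Hence $Ag_2 \nprec_1 Ag_1$ (since $4 \not< 3$) but $Ag_2 \prec_1 Ag_1^+$ (since $4 < 5$), violating the second clause of Definition~\ref{def-reasonableOrdering}. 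Any correct proof would have to either reinterpret $\mathcal{R}_s$ so that the $\mathcal{\overline{N}}$-sequents are not internal strict rules, or amend the strict-continuation or preference definitions; as written, the invariance route cannot succeed.
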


\begin{definition}[Argument evaluation]\label{def:AF}
	Let $\langle \mathcal{A}, \mathcal{D}_x\rangle$ be an AF, where $\mathcal{A}$ is the set of all the arguments constructed based on a LeAr, and $\mathcal{D}_x$ are the sets of defeats between arguments with respect to ordering $<_x$. 
	For all arguments $Ag_1$, $Ag_2\in\mathcal{A}$, $(Ag_1, Ag_2)\in\mathcal{D}_x$ if $Ag_1$ attacks $Ag_2$ and $Ag_1\nprec_x Ag_2$. A set of arguments  $E\subseteq\mathcal{A}$ is \textit{x-conflict-free} if  there are no $Ag_1, Ag_2\in E$ such that $(Ag_1, Ag_2)\in \mathcal{D}_x$. An argument $Ag_1$ is said to be \textit{x-defended} by $E$, if for any $Ag_2\in \mathcal{A}$, if $(Ag_2, Ag_1)\in \mathcal{D}_x $, then there is a $Ag_3\in {E}$ such that $(Ag_3, Ag_2)\in \mathcal{D}_x$.   A set of arguments  $E_x\subseteq\mathcal{A}$ is said to be \textit{x-complete}, if: 1) $E_x$ is  x-conflict-free, 2) for any  $Ag\in E_x$, $Ag$ is x-defended by $E_x$ and 3) for any $Ag\in\mathcal{A}$ x-defended by $E$, $Ag\in E_x$.
\end{definition}
In the following, without confusion, we denote arbitrary $\mathcal{D}_x$ by $\mathcal{D}$ and $E_x$ by $E$. 
\begin{proposition}\label{lem-ABdefeat}
	For any $Ag_1, Ag_2\in\mathcal{A}$, 
	\begin{enumerate}
		\item if $Ag_1'\in \texttt{Sub}(Ag_1)$ such that $(Ag_2, Ag_1')\in \mathcal{D}$, then $(Ag_2, Ag_1)\in \mathcal{D}$; 
		\item 	if $\mathrm{nnConc}(Ag_2)=0$ and $Ag_1$ attacks $Ag_2$, then $(Ag_1, Ag_2)\in\mathcal{D}$;
 		\item if $Ag_1$, $Ag_2$ attack each other, then one of the following cases holds: 
		 $(Ag_1, Ag_2)\in \mathcal{D}$; ii) $(Ag_2, Ag_1)\in \mathcal{D}$.
	\end{enumerate}
\end{proposition}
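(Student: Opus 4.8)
The plan is to prove the three items separately, in each case expanding the defeat relation of Definition~\ref{def:AF} into its two constituents---the presence of an attack and the failure of a strict preference $\nprec$---and then verifying the two constituents independently.

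For item~(1) I would first show that the attack propagates upward. By Definition~\ref{def:AF}, $(Ag_2,Ag_1')\in\mathcal{D}$ yields a point $Ag_1''\in\texttt{Sub}(Ag_1')$ at which $Ag_2$ attacks $Ag_1'$, so by Definition~\ref{def:conflict} $Ag_1''$ is either a premise in $\mathcal{K}$ or of the form $\ldots\Rightarrow_n A$ and $\mathrm{Conc}(Ag_2)=\neg\,\mathrm{Conc}(Ag_1'')$. Since the $\texttt{Sub}$ operator of Definition~\ref{def:argument} is transitively closed, $Ag_1'\in\texttt{Sub}(Ag_1)$ gives $Ag_1''\in\texttt{Sub}(Ag_1)$, and the attack conditions on $Ag_1''$ are intrinsic to $Ag_1''$, hence still met; thus $Ag_2$ attacks $Ag_1$. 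It then remains to transfer the preference clause $Ag_2\nprec Ag_1'$ to $Ag_2\nprec Ag_1$. For this I would view $Ag_1$ as a continuation of $\{Ag_1'\}$ and apply the reasonableness of the ordering (Proposition~\ref{pro-resonableOrder-last}): its second clause, read with $Ag_1'$ as the base argument and $Ag_1$ as its continuation, turns $Ag_2\nprec Ag_1'$ into $Ag_2\nprec Ag_1$. Together with the inherited attack this gives $(Ag_2,Ag_1)\in\mathcal{D}$.

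Items~(2) and~(3) should follow purely from the order structure of the prin labels. For~(2), $\mathrm{nnConc}(Ag_2)=0$ says that $\mathrm{Conc}(Ag_2)$ carries no natural-number label, so $nx(\mathrm{Conc}(Ag_2))$ is the least (empty/default) label for every position; since no value is strictly below it, $nx(\mathrm{Conc}(Ag_1))<nx(\mathrm{Conc}(Ag_2))$ fails and $Ag_1\nprec Ag_2$ holds automatically, so the assumed attack already certifies $(Ag_1,Ag_2)\in\mathcal{D}$. For~(3) I would argue by contradiction: if neither ordered pair were a defeat, then---both attacks being present---the preference clause would have to fail in each direction, giving simultaneously $nx(\mathrm{Conc}(Ag_1))<nx(\mathrm{Conc}(Ag_2))$ and $nx(\mathrm{Conc}(Ag_2))<nx(\mathrm{Conc}(Ag_1))$, which contradicts the asymmetry of $<$ on $\mathbb{N}$; hence at least one of the two defeats holds.

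The step I expect to be the main obstacle is the preference transfer inside item~(1). Attack inheritance and the order-theoretic reasoning in~(2)--(3) are routine, but passing from $Ag_2\nprec Ag_1'$ to $Ag_2\nprec Ag_1$ is delicate exactly when the inferences that build $Ag_1$ above $Ag_1'$ apply defeasible norms: then $\norms(Ag_1)\neq\norms(Ag_1')$, so $Ag_1$ fails the equality $\norms(Ag)=\bigcup_i\norms(Ag_i)$ demanded of a strict continuation and Proposition~\ref{pro-resonableOrder-last} does not apply in one shot. I would resolve this by decomposing the derivation path from $Ag_1'$ to the root of $Ag_1$ into single rule applications and checking that each such step can only leave the relevant position-$x$ label no larger, so that the non-preference survives each step and hence reaches $Ag_1$.
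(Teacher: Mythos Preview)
The paper's own proof is the single sentence ``Follows directly from the definitions above.'' Your treatment of items~(2) and~(3) is exactly the intended unpacking of that sentence: the order-theoretic facts about $nx$ and $<$ on $\mathbb{N}$ are all that is needed, and your arguments are correct.

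For item~(1) you are doing considerably more work than the paper, and the extra work does not go through as written. The attack-propagation half is fine (transitivity of $\texttt{Sub}$). The preference-transfer half is where you correctly locate the difficulty, but your proposed resolution---decomposing the path from $Ag_1'$ to $Ag_1$ and arguing that the position-$x$ label can only weakly decrease at each step---is not supported by the definitions. A norm in $\mathcal{N}$ is an arbitrary sequent between literals, so a single defeasible step such as $p\cdot i\Rightarrow_n q\cdot j$ with $j>i$ is perfectly admissible and strictly raises the label; the monotonicity you need simply fails. Invoking Proposition~\ref{pro-resonableOrder-last} does not help either, for the reason you already noted: once a norm is applied above $Ag_1'$, $Ag_1$ is no longer a strict continuation of $\{Ag_1'\}$.

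What the paper is implicitly relying on is the standard $ASPIC^+$ convention (cf.\ \cite{MP13}, from which Definition~\ref{def-reasonableOrdering} and the rationality postulates are borrowed): the preference test in a defeat is made against the \emph{subargument on which the attack lands}, not against the top-level attackee. Under that reading, $(Ag_2,Ag_1')\in\mathcal{D}$ already records $Ag_2\nprec Ag_1''$ at the attack point $Ag_1''\in\texttt{Sub}(Ag_1')\subseteq\texttt{Sub}(Ag_1)$, and exactly the same pair $(Ag_2,Ag_1'')$ witnesses $(Ag_2,Ag_1)\in\mathcal{D}$---no transfer argument is needed at all, and the proposition really is ``direct from the definitions.'' If instead one insists on the literal wording of Definition~\ref{def:AF}, then the gap you identified is a gap in the paper, not in your reasoning.
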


\begin{proof}
Follows directly from the definitions above.
\end{proof}

\begin{proposition}\label{lem-cA}
	For any complete extension $E=\{Ag_{1}$ , $\ldots$, $Ag_{n}\}$, if there exists an argument $Ag$ such that $Ag$ is a strict continuation of $E$, then $Ag\in E$.
\end{proposition}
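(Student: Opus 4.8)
The plan is to exploit the third clause in the definition of an $x$-complete extension (Definition \ref{def:AF}): since $E$ is complete, it contains every argument that it $x$-defends, so it suffices to prove that the strict continuation $Ag$ is $x$-defended by $E$. I would therefore fix an arbitrary $Ag_2\in\mathcal{A}$ with $(Ag_2, Ag)\in\mathcal{D}$ and aim to produce some $Ag_3\in E$ with $(Ag_3, Ag_2)\in\mathcal{D}$, which by clause (3) forces $Ag\in E$.

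First I would unfold the defeat $(Ag_2, Ag)\in\mathcal{D}$. By Definition \ref{def:conflict} together with Definition \ref{def:AF}, $Ag_2$ attacks $Ag$ on some fallible subargument $Ag'\in\sub(Ag)$ — that is, $Ag'$ is either a premise $A\in\mathcal{K}$ or carries a defeasible (norm) top rule — with $\conc(Ag')=A$, $\conc(Ag_2)=\neg A$, and $Ag_2\nprec_x Ag$. The heart of the argument is the structural claim that $Ag'$ is already a subargument of some $Ag_i\in E$. For the defeasible case this is forced by $\norms(Ag)=\bigcup_{i}\norms(Ag_i)$: the norm at the top of $Ag'$ lies in $\norms(Ag_i)$ for some $i$, and since a norm is applied in exactly one way to the conclusions it consumes, it produces inside $Ag_i$ the very subargument $Ag'$. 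For the premise case one reads the clause $\prem(Ag)\supseteq\bigcup_i\prem(Ag_i)$ in the form that a strict continuation only stacks strict rules on top of the members of $E$, so that its attackable premises are exactly those already occurring in the $Ag_i$.

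Having localized $Ag'$ to some $Ag_i$, I would then establish $(Ag_2, Ag_i)\in\mathcal{D}$. The attack part is immediate: the conflict conditions are identical and $Ag'\in\sub(Ag_i)$, so $Ag_2$ attacks $Ag_i$ on $Ag'$. The preference part, $Ag_2\nprec_x Ag_i$, I would obtain from $Ag_2\nprec_x Ag$ through the reasonableness of $\prec_x$ (Proposition \ref{pro-resonableOrder-last}), using that $Ag$ is a strict continuation of the members of $E$; alternatively one can route the whole step through Proposition \ref{lem-ABdefeat}(1), which propagates a defeat on $Ag'$ up to a defeat on the containing argument $Ag_i$. Once $(Ag_2, Ag_i)\in\mathcal{D}$ holds, completeness of $E$ (clause 2 of Definition \ref{def:AF}, that $E$ $x$-defends each of its members) yields an $Ag_3\in E$ with $(Ag_3, Ag_2)\in\mathcal{D}$. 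Since $Ag_2$ was an arbitrary defeater of $Ag$, this shows $Ag$ is $x$-defended by $E$, and clause (3) gives $Ag\in E$.

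I expect the main obstacle to be the defeat-transfer step — turning a defeater of the continuation $Ag$ into a defeater of one of the continued arguments $Ag_i$. This splits into two delicate points. The structural localization of the attacked fallible subargument is clean for norm-topped subarguments but, for premises, leans on reading the strict-continuation conditions as introducing no fresh attackable premise beyond those of $E$. And the transfer of the preference $\nprec_x$ from $Ag$ to $Ag_i$ is exactly the situation the reasonableness condition is meant to govern; here care is needed both so that the direction of the reasonableness clauses matches the direction in which the defeat must be propagated, and because reasonableness is stated for continuations of a single argument whereas $Ag$ continues the entire set $E$, so it must be applied to the members of $E$ appropriately rather than verbatim.
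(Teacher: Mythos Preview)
Your proposal is correct and takes essentially the same approach as the paper: both show that any defeater of the strict continuation $Ag$ must already defeat some $Ag_i\in E$, so that $E$ defends $Ag$ and clause~(3) of Definition~\ref{def:AF} forces $Ag\in E$. The paper packages this as a proof by contradiction and splits cases on whether the attack lands on $Ag$ itself or on a proper subargument, whereas you argue directly and split on premise versus norm-topped subargument, but the core defeat-transfer step is identical---and you are in fact more explicit than the paper about the preference-transfer and premise-localization subtleties, which the paper handles by simply citing Definitions~\ref{def:conflict} and~\ref{def:AF}.
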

\begin{proof}
Suppose that $Ag\not\in E$. Then by Definition \ref{def:AF}, $Ag$ is not defended by $E$. 
Therefore, there is an argument $Ag'\in \mathcal{A}$, such that $Ag'$ defeats $Ag$ on $Ag''\in\sub(Ag)$, and $\nexists Ag_x \in E$ such that $Ag_x$ attacks $Ag'$ or $Ag_x\prec Ag'$. 
We consider two possibilities. If $Ag''\not=Ag$, then by Definition \ref{def:conflict} and \ref{def:AF}, there exists $Ag_i$ ($1\leq i\leq n$) such that $(Ag',Ag_i)\in \mathcal{D}$,  contradicting the fact that $E$ is a complete extension defending $Ag_i$. Otherwise, $Ag'$ directly attacks $Ag$. 
Then there exists $\conc(Ag_1),\ldots,$ $\conc(Ag_k)\Rightarrow \conc(Ag)\in \norms(Ag)$ where $\conc(Ag_j)=\alpha_j$ for $1\leq j\leq k$. Hence  $\alpha_1,\ldots,\alpha_k\Rightarrow_n \conc(Ag_i)\in \norms(Ag_i)$ ($1\leq i\leq n$). 
Thus by Definition \ref{def:conflict} and \ref{def:AF}, $(Ag', Ag_i)\in \mathcal{D}$, again contradicting $E$ being a complete extension that defends $Ag_i$.
 
\end{proof}

By Proposition \ref{lem-ABdefeat}, \ref{lem-cA} one can easily obtain the following theorem. Similar proofs can be found in \cite{MP13}.

\begin{theorem}\label{pr:RP}
	Let $E$ be a complete extension based on LeAr. The following properties hold.
	\begin{itemize}
		\item $\forall Ag\in E$, if $Ag'\in \texttt{Sub}(Ag)$, then $Ag'\in E$. (Sub-argument Closure)
		\item $\{\texttt{Conc}(Ag|Ag\in E\}=Cl_{\mathcal{R}_s}(\{\texttt{Conc}(Ag)|Ag\in E\})$. (Closure under Strict Rules)
		\item $\{\texttt{Conc}(Ag)|Ag\in E\}$ is consistent. (Consistency)
	\end{itemize}
\end{theorem}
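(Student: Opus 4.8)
The plan is to prove the three postulates in the listed order, each time reducing the claim to Propositions \ref{lem-ABdefeat} and \ref{lem-cA} together with the completeness conditions of Definition \ref{def:AF} (conflict-freeness, defence, and the fixpoint property that $E$ contains every argument it defends). For Sub-argument Closure, I would fix $Ag\in E$ and $Ag'\in\sub(Ag)$ and show that $E$ defends $Ag'$, so that the fixpoint property forces $Ag'\in E$. Let $Ag_2$ be any argument with $(Ag_2,Ag')\in\mathcal{D}$. By Proposition \ref{lem-ABdefeat}(1) the defeat propagates up the sub-argument relation, giving $(Ag_2,Ag)\in\mathcal{D}$. Since $Ag\in E$ and $E$ is complete, $E$ defends $Ag$, so some $Ag_3\in E$ satisfies $(Ag_3,Ag_2)\in\mathcal{D}$. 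As $Ag_2$ was arbitrary, $E$ defends $Ag'$, hence $Ag'\in E$.

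For Closure under Strict Rules, the inclusion $\{\conc(Ag)\mid Ag\in E\}\subseteq Cl_{\mathcal{R}_s}(\{\conc(Ag)\mid Ag\in E\})$ is immediate, since $Cl_{\mathcal{R}_s}$ is a closure operator. For the reverse inclusion, take $A\in Cl_{\mathcal{R}_s}(\{\conc(Ag)\mid Ag\in E\})$; by the definition of $Cl_{\mathcal{R}_s}$ there is a strict-only derivation of $A$ from conclusions of finitely many $Ag_1,\ldots,Ag_k\in E$. Grafting that derivation on top of $Ag_1,\ldots,Ag_k$ yields an argument $Ag^{\ast}$ with $\conc(Ag^{\ast})=A$ that introduces no new norms and no new premises; hence $\norms(Ag^{\ast})=\bigcup_i\norms(Ag_i)$, $\sr(Ag^{\ast})\supseteq\bigcup_i\sr(Ag_i)$, and $\prem(Ag^{\ast})\supseteq\bigcup_i\prem(Ag_i)$, so $Ag^{\ast}$ is a strict continuation of $\{Ag_1,\ldots,Ag_k\}\subseteq E$. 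Proposition \ref{lem-cA}, whose argument applies verbatim to a strict continuation of a subset of $E$, then gives $Ag^{\ast}\in E$, so $A\in\{\conc(Ag)\mid Ag\in E\}$.

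For Consistency I would argue by contradiction. By Definition \ref{def:consistency} and the Closure under Strict Rules just established, if $\{\conc(Ag)\mid Ag\in E\}$ were inconsistent then already $B,\neg B\in\{\conc(Ag)\mid Ag\in E\}$ for some $B$, witnessed by $Ag_2,Ag_1\in E$ with $\conc(Ag_2)=B$ and $\conc(Ag_1)=\neg B$. The aim is to exhibit a defeat between two members of $E$, contradicting conflict-freeness. If $B$ is a $\mathcal{K}$-premise or the conclusion of a norm application, then $Ag_1$ attacks $Ag_2$ directly by Definition \ref{def:conflict}; otherwise $B$ is obtained by strict rules at the top, and I would invoke the contraposition property of $\mathrm{LN}$ to rewrite the strict sub-derivation that yields $B$ into one that derives the negation of some norm-conclusion or $\mathcal{K}$-premise occurring inside $Ag_2$, thereby relocating the clash onto an attackable sub-argument. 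In either case I obtain mutually attacking arguments; by Proposition \ref{lem-ABdefeat}(3) one of the attacks is a defeat (and Proposition \ref{lem-ABdefeat}(2) discharges the case $\mathrm{nnConc}=0$ outright), while reasonableness of the ordering (Proposition \ref{pro-resonableOrder-last}) ensures the defeat survives between the actual members of $E$, contradicting conflict-freeness.

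Sub-argument Closure and the easy half of Closure under Strict Rules are routine. The main obstacle is Consistency: because $\mathrm{LN}$ is substructural (neither weakening nor contraction is available), the familiar ASPIC$^+$ manoeuvres of freely merging arguments and applying transposition must be performed while carefully tracking which premises and norms are consumed, and the relocation of the attack onto a defeasible point via contraposition must be reconciled with the numerical priority annotations so that it produces a genuine defeat rather than a bare attack.
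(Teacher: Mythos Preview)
Your proposal is correct and follows precisely the route the paper indicates: the paper's own proof is a single sentence stating that the theorem follows from Propositions \ref{lem-ABdefeat} and \ref{lem-cA} together with the standard arguments in \cite{MP13}, which is exactly the decomposition you spell out. Your treatment of Sub-argument Closure and Closure under Strict Rules is the expected one, and your identification of Consistency as the delicate point---requiring contraposition in $\mathrm{LN}$ (the paper's Proposition~1) and the reasonableness of $\prec$ from Proposition \ref{pro-resonableOrder-last}---matches what the ASPIC$^+$ proof in \cite{MP13} actually does; the paper offers no further detail beyond that reference.
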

By theorem \ref{pr:RP} we show that LeAR based on LN satisfies the basic \textit{rationality postulates} according to Caminada and Amgoud \cite{CA07}.

\section{Application}\label{sec:application}

This section illustrates the system's function through a simple case:
\begin{example}\label{UK USA}
A UK-compliant autonomous vehicle is adapted for the US market, balancing costs, efficiency, and legal compliance across varied state regulations.
\end{example}

The logic system LN allows formulas to be labelled with natural numbers while supporting basic reasoning functions (both certain and uncertain). This enables us to introduce priorities in a concise manner using numerical representations as needed. 

Through preliminary research on traffic regulations in the UK and US states, we divide the priority of legal rules in design into four levels according to the different legal mandatory nature, demonstrated as follows: 
\begin{itemize}
    \item Mandatory and prohibitive requirements are labelled as 4.
    \item Desirable requirements are labelled as 3.
    \item Recommended requirements are labelled as 2. 
    \item Permissible requirements are labelled as 1.
\end{itemize}
 
We extracted portions of UK traffic laws and US traffic regulations in various states to form two sets of rules for Example \ref{UK USA}. The comparison of traffic rules presented in Table \ref{tab:seletedrules} illustrates the potential differences and the extent of variation between British and American traffic rules.

\begin{footnotesize}

\begin{table*}
    \centering
    \begin{tabular}{c|p{0.43\textwidth}|p{0.43\textwidth}}
\toprule
        &\textbf{Traffic rules from the UK} &  \textbf{Traffic rules from US}\\ \hline
         Rule 1: &Vehicles \textit{must} drive on the \textcolor{red}{left} side of the roadway unless otherwise directed.& 
         Vehicles \textit{must} drive on the \textcolor{red}{right} side of the roadway unless otherwise directed. 
         \\ 
         Rule 2: &Drivers \textit{must} give way to emergency vehicles when safe to do so.& 
         Drivers \textit{must} give way to emergency vehicles when safe to do so. (\textbf{same})\\ 
         Rule 3: &Vehicles \textit{must} display number plates \textcolor{red}{both at the front and rear}. & 
         Vehicles \textit{must} display a number plate \textcolor{red}{at the rear}./Front number plates are \textcolor{red}{not required} in some states.
         \\ 
         Rule 4: &Vehicle front hoods \textit{must} have energy-absorbing structures. &
          (\textbf{empty})
         \\ 
         Rule 5: &Vehicles are \textcolor{red}{\textit{prohibited} from making a turn} at a red light. &
         Vehicles are \textcolor{red}{\textit{allowed} to make a turn} in the direction of the lane (right or left) at a red light. 
         \\ 
         Rule 6: &It is \textcolor{red}{\textit{recommended}} to use the headlights during the day to improve visibility.&
         In some states, daytime running lights are \textcolor{red}{\textit{mandatory}}. \\  
         Rule 7: &Drivers \textit{should} maintain a safe distance from the vehicle in front \textcolor{red}{by following the general rule of a two-second gap}. 
          &Drivers \textit{should} maintain a safe following distance based on speed and conditions. (\textbf{similar})
          \\ 
          Rule 8: &In the event of a minor accident causing only slight damage to personal property, it is \textcolor{red}{\textit{permissible} for drivers to handle the matter with their insurance companies}. &
         If property damage exceeds a set amount (e.g., \$500), \textcolor{red}{reporting to the police is \textit{mandatory}}. 
                   \\
                      \bottomrule
    \end{tabular}
    \caption{Comparison of US and UK traffic rules}
    \label{tab:seletedrules}
\end{table*}
\end{footnotesize}

Following the labelling principles outlined above, we assign numerical labels to each formula representing legal provisions. For instance, Rule 1 in Table \ref{tab:seletedrules} yields the following formalizations: $DriveLeft(AV)\cdot1\cdot 4\Rightarrow_n\neg DriveRight(AV)\cdot1\cdot4 $ and $DriveRight(AV)\cdot2\cdot4\Rightarrow_n\neg DriveLeft(AV)\cdot2\cdot4$, where the first numerical label following a formula distinguishes the country (1 for the initial country and 2 for the target country), and the second label indicates the strength of the legal modality.

According to rules in Table \ref{tab:seletedrules} and the logical system introduced in Section~\ref{sec:formal}, we can construct the following argumentation theory. 

\begin{footnotesize}

\begin{equation*}
\mathcal{K}
    =\left\{
    \begin{array}{l}
  DriveLeft(AV) \cdot1 \cdot4, DriveRight(AV)\cdot2\cdot4, 
  Emergency(V), \\NumPlate(AV, Rear)\cdot1\cdot4, NumPlate(AV, Front)\cdot1\cdot4, \\
         NumPlate(AV, Rear)\cdot2\cdot4, NumPlate(AV, Front)\cdot2\cdot1, \\EnergyAbsorbing(AV, FrontHood)\cdot1\cdot4, \\
         \neg Turn(AV, RedLight)\cdot1\cdot4, 
         Turn(AV, RedLight)\cdot2\cdot1, \\PoorVision(Day), Front(V),\\
         SlightDamage(AV), Damage>\$500(AV)
    
    \end{array}
 \right\}
\end{equation*}

\begin{equation*}
\mathcal{N}
    =\left\{
\begin{array}{ll}
     DriveLeft(AV)\cdot1\cdot 4\Rightarrow_n\neg DriveRight(AV)\cdot1\cdot4 \\
     DriveRight(AV)\cdot2\cdot4\Rightarrow_n\neg DriveLeft(AV)\cdot2\cdot4\\
    Emergency(V)\Rightarrow_n GiveWay(AV, V)\cdot1\cdot4\\
         Emergency(V)\Rightarrow_n GiveWay(AV, V)\cdot2\cdot4\\
         PoorVision(Day)\Rightarrow_n TurnOnHeadlights(AV, Day)\cdot1\cdot2\\
         PoorVision(Day)\Rightarrow_n TurnOnHeadlights(AV, Day)\cdot2\cdot4\\ 
         Front(V)\Rightarrow_n TwoSecondGap(AV, V)\cdot1\cdot3\\
         Front(V)\Rightarrow_n SafeDistance(AV, V)\cdot2\cdot3\\
          SlightDamage(AV)\Rightarrow_n Insurance(AV)\cdot1\cdot1\\ Damage>\$500(AV)\Rightarrow_n CallPolice(AV)\cdot2\cdot4\\
\end{array}
 \right\}
\end{equation*}
\end{footnotesize}

Based on Definition \ref{def:argument} and \ref{def:conflict},  we can construct and identify the following conflicting arguments.
\footnote{To make the distinction easier, we use $B$ to denote arguments constructed according to the UK rules and $A$ for those constructed according to the US rules.} 
The conflict relation between these arguments is shown in Figure~\ref{fig:conflict_1}.

\begin{footnotesize}
  \begin{tabular*}{0.4\textwidth}{@{\extracolsep{\fill} }ll}
     $B_1: DriveLeft(AV)\cdot1\cdot 4$ \\ $A_1: DriveRight(AV)\cdot2\cdot 4\Rightarrow_n \neg DriveLeft(AV)\cdot2\cdot 4$\\
 $B_2: DriveLeft(AV)\cdot1\cdot 4\Rightarrow_n \neg DriveRight(AV)\cdot1\cdot 4$\\
		$A_2: DriveRight(AV)\cdot2\cdot 4$\\
		$B_3: \neg Turn(AV, RedLight)\cdot1\cdot4$\\
		$A_3: Turn(AV, RedLight)\cdot2\cdot1$\\
		
            \end{tabular*}
\end{footnotesize}

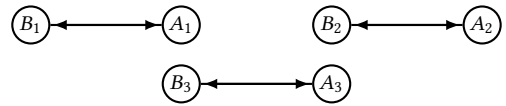
\begin{figure}[h]
	\[
	\begin{tabular}{c}
		\begin{tikzpicture}[->,>=latex,thick,shorten >=1pt,font=\small,scale=0.5] 
			\vertex (B1) at (0,3) {$B_{1}$};
			\vertex (A1) at (4,3) {$A_{1}$};
            \vertex (B2) at (8,3) {$B_{2}$};
			\vertex (A2) at (12,3) {$A_{2}$};
			\draw [->] (B1) -- (A1);
            \draw [->] (A1) -- (B1);
            \draw [->] (B2) -- (A2);
            \draw [->] (A2) -- (B2);
		\end{tikzpicture}
	\end{tabular}
	\]
    \[
	\begin{tabular}{c}
		\begin{tikzpicture}[->,>=latex,thick,shorten >=1pt,font=\small,scale=0.5] 
			\vertex (B3) at (0,3) {$B_{3}$};
			\vertex (A3) at (4,3) {$A_{3}$};
			\draw [->] (A3) -- (B3);
            \draw [->] (B3) -- (A3);
		\end{tikzpicture}\\
	\end{tabular}
	\]
	\vspace*{-5pt}\caption{Conflicts between arguments}\label{fig:conflict_1}
\end{figure}

If we adjust based on the importance of regulations, prioritization and the final conflict relations between arguments should be determined by the second numerical label to avoid more severe penalties. Therefore, the attack from $A_3$ to $B_3$ is deleted since $1<4$. 

When regulations are of equal importance, users may prefer to match the regulations of the target country. In this case, based on the first numerical label on a formula, regulations marked as 2 take precedence over those marked as 1 (i.e., if the US is the target country, its regulations override those of the UK). This leads to the elimination of attacks from $B_1$ to $A_1$ and from $B_2$ to $A_2$. 
Therefore, within the set of jointly acceptable conclusions derived from the attack relations among arguments $A_1$, $A_2$, $B_1$, and $B_2$ (cf. Definition \ref{def:AF}), it is reasonable to select the set containing conclusions $DriveRight(AV)$ and $\neg DriveLeft(AV)$.

Based on the formal theory proposed in Section \ref{sec:formal}, at least the following decision principles can be supported for argument evaluation and conclusion selection:

\begin{description}
    \item[Minimal Adjustment] Adopt higher-priority regulations while preserving non-conflicting rules from the original country.
    \item[Maximum Consistency] Fully align with the jurisdiction holding the highest priority.
    \item[Caution First] Maintain all non-conflicting legal requirements from both jurisdictions.
\end{description}

Each principle addresses distinct requirements: 
\textit{Minimal Adjustment} optimizes efficiency; 
\textit{Maximum Consistency} enforces complete compliance; 
\textit{Caution First} maximizes legal safety through conflict avoidance. 
Additionally, numerical labels have the potential to support further strategies like \textit{Gradual Transition}, which allows for gradual regulatory adaptation with minimal changes at each stage.

\section{Conclusion}\label{sec:conc}

This paper proposes the use of formal argumentation as a tool for non-monotonic reasoning, incorporating Lambek calculus with negation as the foundational logic. The basic aim is to deliver an inference system that is both easy to understand and use, while supporting a robust knowledge base and defeasible knowledge representation, specifically within the context of cross-border autonomous driving applications. The introduction of the logical foundation allows the system to derive the necessary strict reasoning rules and ensure that it meets rationality requirements. Meanwhile, the defeasible knowledge representation empowers users to flexibly express uncertain inference relationships.

The weights represented by labels in the form of natural numbers offer intuitive comprehension and facilitate easy tracing of the origins of priority levels. Taken together, this system has the potential to serve as a user-friendly non-monotonic reasoning tool for professionals in the fields of law and autonomous vehicle design, as well as for users of autonomous vehicles.








\bibliographystyle{ACM-Reference-Format}
\bibliography{logiclaw}










\end{document}